\documentclass[journal,twoside,web]{ieeecolor}

\usepackage{graphics} 
\usepackage{epsfig} 
\usepackage{mathptmx} 
\usepackage{amsmath} 
\usepackage{amssymb}  
\usepackage{xcolor}
\usepackage{mathtools}

\usepackage[normalem]{ulem}

\usepackage{lcsys}
\usepackage{graphicx}
\newtheorem{theorem}{Theorem}[section]

\newtheorem{lemma}[theorem]{Lemma}

\newtheorem{definition}[theorem]{Definition}
\newtheorem{assumption}[theorem]{Assumption}
\newtheorem{remark}[theorem]{Remark}

\usepackage{algorithm}
\usepackage{algorithmic}
\usepackage[hidelinks]{hyperref}
\usepackage{cite}

\newif\ifprerev
\prerevfalse

\newif\ifpostrev
\postrevtrue

\newcommand{\M}{\mathcal{M}}
\newcommand{\X}{\mathcal{X}}
\newcommand{\Exp}{\mathrm{Exp}}
\newcommand{\Log}{\mathrm{Log}}

\newcommand{\Reg}{\mathrm{Reg}}
\newcommand{\dist}{d}
\newcommand{\Bus}{B}

\DeclareMathOperator*{\argmin}{arg\,min}
\newcommand{\sigmatwo}{\sigma_2(W)}

\begin{document}

\title{Curvature-Independent Regret Bounds for Distributed Online Optimization on Hadamard Manifolds}
\author{Zhanyuan Cai, Emre Sahinoglu and Shahin Shahrampour
\thanks{This work is supported in part by NSF Award ECCS-2240788 as well as NSF CAREER Award ECCS-2442321.}
\thanks{The authors are with the Department of Mechanical and Industrial Engineering,
         Northeastern University, Boston, MA 02115.
         {\tt\small \ Emails: \{cai.zha; sahinoglu.m;s.shahrampour\}@northeastern.edu}.}
 }

\pagestyle{empty}
\maketitle
\thispagestyle{empty}

\begin{abstract}
 This work addresses decentralized online Riemannian optimization on Hadamard manifolds. Prior work under geodesic convexity (g-convexity) may require curvature information in the optimization analysis, typically through a finite lower bound on the sectional curvature. Curvature may also enter the step size or contraction factor of tangent-space Riemannian consensus schemes. In this work, we relax the curvature dependence for a narrower class of horospherical convex (h-convex) functions. We study Distributed Riemannian Online Gradient Descent (D-ROGD), which combines local Riemannian h-subgradient updates with an implicit Fr\'echet-mean consensus. For h-convex and strongly h-convex local objectives, we establish $O(\sqrt{T})$ and $O(\log T)$ static regret, respectively, matching the corresponding Euclidean rates with respect to $T$, with network dependence governed solely by the spectral gap. To our knowledge, these are the first curvature-independent regret guarantees for decentralized online optimization on Hadamard manifolds. Experiments on hyperbolic embeddings corroborate the predicted rates, with no observable degradation due to curvature.
\end{abstract}
 
\begin{IEEEkeywords}
Online optimization, Riemannian optimization, distributed optimization, Hadamard manifolds, horospherical convexity, regret.
\end{IEEEkeywords}
 
\section{Introduction}

Many control and engineering problems naturally involve decision variables constrained to nonlinear geometric spaces. Representative examples include covariance estimation and averaging over symmetric positive-definite matrices \cite{moakher2005differential}, Riemannian system identification with structured positive-definite variables \cite{sato2019riemannian1}, and feedback-policy synthesis over submanifolds induced by control constraints \cite{talebi2023policy}. Riemannian optimization provides a framework for such problems, enabling optimization directly on the underlying manifold while respecting the geometry of the decision space \cite{absil2009optimization,boumal2023introduction}.

On the other hand, the sequential and distributed nature of many control, estimation, and learning systems motivates online and decentralized extensions of Riemannian optimization. In the online setting, the objective varies over time and is revealed only after a decision is made \cite{wang2023online,wang2025online}, while in the decentralized setup, data and/or computation are distributed across a network of agents \cite{shahrampour2017distributed}. Combining these two settings, we consider agents that sequentially update their (manifold-valued) decisions, observe private local losses, and communicate only with neighboring agents, with the goal of achieving a small regret, defined relative to the best fixed collective decision in hindsight. While this framework is well understood in the Euclidean domain, its Riemannian counterpart introduces additional geometric challenges.

Unlike in the Euclidean setting, curvature can affect decentralized online Riemannian optimization through two distinct mechanisms. On the optimization side, standard analyses under g-convexity rely on a Riemannian cosine inequality containing the factor $\sqrt{|\kappa|}D\coth(\sqrt{|\kappa|}D)$, which requires a finite sectional-curvature lower bound and deteriorates with both curvature $\kappa$ and domain diameter $D$ \cite{zhang2016first}. On the network side, curvature-aware tangent-space consensus schemes may require curvature-dependent step sizes and contraction factors \cite{chen2024decentralized,sahinoglu2025decentralizedonlineriemannianoptimization,cai2026decentralized}. Implicit Fr\'echet-mean consensus avoids this dependence on Hadamard manifolds \cite{chen2024decentralized}, but curvature-independent error guarantees compatible with the diminishing step sizes required for $O(\log T)$ regret have not been established. Thus, curvature-independent regret bounds require controlling both optimization progress and network error without curvature-dependent terms.

We address these two sources of curvature dependence through complementary geometric mechanisms. On the optimization side, we adopt horospherical convexity (h-convexity), which provides curvature-free inequalities through Busemann functions \cite{criscitiello2025horospherically,sahinoglu2025online}. On the network side, we employ implicit weighted Fr\'echet-mean consensus, whose contraction depends only on network connectivity and requires no curvature-dependent consensus step size \cite{chen2024decentralized}. We then develop a curvature-independent network-error analysis for arbitrary step size schedules, accommodating both the h-convex and strongly h-convex online regimes without requiring a finite lower bound on the sectional curvature. This curvature independence comes at the cost of a more restrictive objective class and a consensus step that requires solving a Fr\'echet-mean subproblem in each round. Our main contributions are as follows:

\begin{itemize}
    \item \textbf{Curvature-independent regret.} We propose Distributed Riemannian Online Gradient Descent (D-ROGD) and establish $O(\sqrt{T})$ and $O(\log T)$ static regret for h-convex and strongly h-convex losses, respectively (Theorems~\ref{thm:hconvex} and~\ref{thm:stronghconvex}), without requiring a finite lower bound on the sectional curvature. The rates match their Euclidean counterparts in $T$, with network dependence governed by the spectral gap \cite{zinkevich2003online,hazan2007logarithmic}. To our knowledge, these are the first curvature-free regret guarantees for decentralized online optimization on Hadamard manifolds.
    \item \textbf{Network error under arbitrary step sizes.} We establish a curvature-independent bound on the network error induced by implicit weighted Fr\'echet-mean consensus under arbitrary step size schedules (Lemma~\ref{lem:neterror}), with network dependence governed solely by the spectral gap. This bound remains valid for diminishing step sizes, including $\eta_t=1/(\mu t)$ in the strongly h-convex analysis. While this result is essential for our regret bound, it may also be of independent interest.
    \item \textbf{Numerical validation.} We evaluate D-ROGD on hyperbolic embeddings of real hierarchical data while varying both manifold curvature and network connectivity. The experiments exhibit regret growth consistent with the theoretical rates, show no systematic deterioration as curvature increases, and confirm that poorer network connectivity leads to higher regret, as predicted by the spectral-gap dependence of the theoretical bounds.
\end{itemize}

\subsection{Literature Review}

\noindent \textbf{Riemannian Online Optimization and Curvature Independence}. Online optimization over Riemannian manifolds has been developed primarily under  g-convexity, extending classical regret analysis to curved decision spaces \cite{wang2023online,wang2025online}. Subsequent works have considered zeroth-order feedback and tracking \cite{maass2022tracking}, projection-free online optimization \cite{hu2023riemannian}, and optimistic methods with dynamic regret \cite{wang2025riemannian}. A persistent feature of this literature is that standard g-convex analyses rely on geometric comparison inequalities whose constants depend on a finite lower bound on the sectional curvature \cite{zhang2016first}, motivating recent efforts toward curvature-independent online guarantees. One route, proposed by Roux et al. \cite{roux2025implicit}, retains g-convexity and removes geometric constants through inexact implicit updates, at the price of replacing standard explicit first-order steps with approximately solved implicit subproblems. A complementary structural route is provided by h-convexity, which uses Busemann functions to obtain curvature-free comparison inequalities on Hadamard manifolds \cite{criscitiello2025horospherically,lewis2024horoballs}. Building on this framework, Sahinoglu and Shahrampour \cite{sahinoglu2025online} establish curvature-independent $O(\sqrt{T})$ and $O(\log T)$ regret for online optimization with h-convex and strongly h-convex objectives, respectively. These results, however, concern a single online learner and leave open the question of how to preserve curvature independence in the decentralized setting, where network disagreement introduces an additional geometric challenge.

\noindent\textbf{Decentralized Riemannian Optimization and Consensus.}
Distributed online Riemannian optimization has so far been developed primarily under g-convexity. Chen and Sun \cite{chen2024decentralized} analyze dynamic regret on Hadamard manifolds, while Sahinoglu and Shahrampour \cite{sahinoglu2025decentralizedonlineriemannianoptimization} extend distributed online optimization beyond Hadamard manifolds to settings with zeroth-order feedback. More recently, Cai et al. \cite{cai2026decentralized} establish $O(\log T)$ regret for strongly g-convex objectives using a network-error analysis that accommodates diminishing step sizes. Despite these advances, curvature-dependent quantities remain either in the optimization analysis under g-convexity or in the consensus mechanism. In contrast, we combine h-convexity with implicit weighted Fr\'echet-mean consensus and establish a curvature-independent disagreement bound for arbitrary step-size schedules. This allows both the optimization and network components of the regret analysis to remain independent of sectional-curvature bounds.

\section{Preliminaries}
\label{sec:background}

We work on a Hadamard manifold $\mathcal{M}$, i.e., a complete, simply connected Riemannian manifold with nonpositive sectional curvature. For $x\in\mathcal{M}$, let $T_x\mathcal{M}$ denote the tangent space, $\langle\cdot,\cdot\rangle_x$ the Riemannian metric with induced norm $\|v\|_x:=\sqrt{\langle v,v\rangle_x}$,
where the subscript is omitted when clear, and $d(\cdot,\cdot)$ the induced geodesic distance. For brevity, we also write $|xy|:=d(x,y)$ for $x,y\in\mathcal{M}$. A set $\mathcal{X}\subseteq\mathcal{M}$ is g-convex if the geodesic segment joining any two points in $\mathcal{X}$ lies entirely in $\mathcal{X}$. The exponential map $\operatorname{Exp}_x:T_x\mathcal{M}\rightarrow\mathcal{M}$ and logarithmic map $\operatorname{Log}_x:\mathcal{M}\rightarrow T_x\mathcal{M}$ are globally well-defined on Hadamard manifolds, see \cite{absil2009optimization,boumal2023introduction} for further background. 

\subsection{Problem Formulation}
\label{subsec:problemform}

\subsubsection{Decentralized Online Riemannian Optimization}
\label{subsubsec:DOR}

Consider $n$ agents communicating over a network described by a symmetric, doubly stochastic matrix $W=(w_{ij})\in\mathbb{R}^{n\times n}$, where $w_{ij}>0$ for $i\neq j$ only if
agents $i$ and $j$ are neighbors \cite{chen2024decentralized,shahrampour2017distributed}. At each round $t\in[T]:=\{1,\ldots,T\}$, agent $i$ selects a decision $x_{i,t}$ from a common feasible set $\mathcal{X}\subseteq\mathcal{M}$ and receives first-order feedback at $x_{i,t}$. The global loss is defined as
\begin{equation}
f_t(x):=\frac{1}{n}\sum_{i=1}^{n} f_{i,t}(x).
\end{equation}
We evaluate the collective performance of the agents through static regret,  which measures the cumulative gap between the agents' decisions and the best fixed collective decision in hindsight, both evaluated under the global loss sequence. Specifically,
\begin{equation}
\label{eq:reg-hconv}
\operatorname{Reg}(T)
:=
\frac{1}{n}\sum_{i=1}^{n}\sum_{t=1}^{T} f_t(x_{i,t})
-
\sum_{t=1}^{T} f_t(x^\ast),
\end{equation}
where $x^\ast\in\arg\min_{x\in\mathcal{X}}\sum_{t=1}^{T} f_t(x)$. The goal is to design a decentralized algorithm that achieves sublinear regret, $\operatorname{Reg}(T)=o(T)$, using only local loss information and neighbor-to-neighbor communication.

To perform consensus on the manifold, we use the weighted Fr\'echet mean as the intrinsic analogue of Euclidean averaging. Given points $\{z_j\}_{j=1}^n\subset\mathcal{M}$ and weights $w_j\geq 0$ with $\sum_{j=1}^n w_j=1$, their weighted Fr\'echet mean is defined as
\begin{equation}
\label{eq:Frechet}
\bar z_w
:=
\arg\min_{z\in\mathcal{M}} \left\{
\sum_{j=1}^n w_j d^2(z,z_j)\right\},
\end{equation}
which is uniquely defined on a Hadamard manifold \cite{sturm2003probability}. In the decentralized setting, agent $i$ uses the weights $\{w_{ij}\}_{j=1}^n$ to aggregate its neighbors' iterates through this minimization, yielding an implicit consensus update \cite{chen2024decentralized}. Unlike linear averaging in Euclidean spaces, the update is defined through an optimization problem over the manifold.

For equal weights, we denote the Fr\'echet mean of a collection $\{z_i\}_{i=1}^n\subset\mathcal{M}$ by $\bar z := \arg\min_{z\in\mathcal{M}} \frac{1}{n}\sum_{i=1}^n d^2(z,z_i)$ and define the corresponding Fr\'echet variance as
\begin{equation}
V_F(\{z_i\}_{i=1}^n)
:=
\frac{1}{n}\sum_{i=1}^n d^2(\bar z,z_i).
\end{equation}
The Fr\'echet variance serves as a geometric measure of disagreement among the agents and will be used to characterize the contraction of the implicit consensus step.

\subsubsection{Horospherical Convexity}
A function $f:\mathcal{M}\rightarrow\mathbb{R}$ is geodesically convex (g-convex) if its restriction to every geodesic is convex in the Euclidean sense \cite{zhang2016first,vishnoi2018geodesic}. H-convexity is a stronger notion on Hadamard manifolds that uses Busemann functions as the analogue of affine functions in Euclidean convexity; we refer to \cite{criscitiello2025horospherically,sahinoglu2025online} for their construction and geometric interpretation. 
For a unit-speed geodesic ray $\gamma:[0,\infty)\to\mathcal M$, its Busemann function is defined as $B_\gamma(x):=\lim_{t\to\infty}\bigl(d(x,\gamma(t))-t\bigr).$ Following \cite{criscitiello2025horospherically,sahinoglu2025online}, for $y\in\mathcal{M}$ and $v\in T_y\mathcal{M}\setminus\{0\}$, let $B_{y,v}:=\|v\|B_{\gamma}$, where $\gamma(0)=y$ and $\dot{\gamma}(0)=-v/\|v\|$; we set $B_{y,0}\equiv 0$ and we recall the following definition.

\begin{definition}[h-convexity]
\label{def:hconvex}
A function $f:\mathcal{M}\rightarrow\mathbb{R}$ is h-convex if, for every $y\in\mathcal{M}$, there exists $v\in T_y\mathcal{M}$ such that
\begin{equation}
f(x)-f(y)\geq B_{y,v}(x),
\qquad \forall x\in\mathcal{M}.
\end{equation}
Such a vector $v$ is called an h-subgradient of $f$ at $y$, and we denote the set of all h-subgradients of $f$ at $y$ as $\partial^hf(y)$.
\end{definition}

Building on h-convexity, we also consider its strongly convex counterpart \cite{criscitiello2025horospherically,sahinoglu2025online}.

\begin{definition}[$\mu$-strong h-convexity]
\label{def:stronghconvex}
A function $f:\mathcal{M}\rightarrow\mathbb{R}$ is $\mu$-strongly h-convex, for $\mu>0$, if for every $y\in\mathcal{M}$, there exists $v\in T_y\mathcal{M}$ such that, for all $x\in\mathcal{M}$,
\begin{equation}
\label{eq:strong-h-conv}
f(x)-f(y)
\geq
Q_{y,v}^{\mu}(x)
:=
-\frac{1}{2\mu}\|v\|^2
+
\frac{\mu}{2}
d^2\!\left(
\operatorname{Exp}_{y}\!\left(-\frac{1}{\mu}v\right),x
\right).
\end{equation}
The set of all such vectors is denoted by $\partial_\mu^h f(y)$. For notational convenience, we set
$\partial_0^h f(y):=\partial^h f(y)$.
\end{definition}

For fixed $y\in\mathcal{M}$ and $v\in T_y\mathcal{M}$, we have $\lim_{\mu\to 0^+}Q_{y,v}^{\mu}(x)=B_{y,v}(x)$ for all $x\in\mathcal{M}$, so the h-convex inequality is recovered in the limit \cite{criscitiello2025horospherically}. Moreover, h-convexity is a subclass of g-convexity; the two notions coincide in Euclidean spaces, while the inclusion can be strict on Hadamard manifolds \cite{criscitiello2025horospherically}. Despite this stronger requirement, the class contains important geometric objectives: in particular, $d(\cdot,z)$ is h-convex and $\frac{1}{2}d^2(\cdot,z)$ is $1$-strongly h-convex, corresponding to geometric median and Fr\'echet mean problems, respectively \cite{criscitiello2025horospherically}.

\subsection{Assumptions}

We impose the following assumptions for the analysis.

\begin{assumption}[Feasible set]
\label{as:min}
The feasible set $\mathcal{X}\subseteq\mathcal{M}$ is compact and g-convex with diameter bounded by $D$, i.e., $d(x,y)\leq D$ for all $x,y\in\mathcal{X}$.
\end{assumption}

The g-convexity of $\mathcal{X}$ on a Hadamard manifold ensures that the metric projection $\mathcal{P}_{\mathcal{X}}(z):=\arg\min_{x\in\mathcal{X}} d(x,z)$ is uniquely defined and nonexpansive, i.e., $d(\mathcal{P}_{\mathcal{X}}(z),\mathcal{P}_{\mathcal{X}}(y))\leq d(z,y)$ \cite{bacak2014convex}. Moreover, since $\mathcal X$ is g-convex, the weighted Fr\'echet mean of points in $\mathcal X$ also belongs to $\mathcal X$ \cite{bacak2014computing}, so minimizing the Fr\'echet objective in Eq. \eqref{eq:Frechet} over $\mathcal M$ or $\mathcal X$ is equivalent.

\begin{assumption}[Network connectivity]
\label{as:network}
The communication matrix $W\in\mathbb{R}^{n\times n}$ is symmetric and doubly stochastic, and satisfies $\sigma_2(W)\in[0,1)$, where $\sigma_2(W)$ denotes the second-largest singular value of $W$.
\end{assumption}

The quantity $1-\sigma_2(W)$ is the spectral gap and quantifies
network connectivity; larger gaps correspond to faster consensus \cite{chen2024decentralized,sahinoglu2025decentralizedonlineriemannianoptimization}.

\begin{assumption}[Local losses]\label{as:losses}
There exists $r>0$ such that for every $i\in[n]$ and $t\in[T]$, the local loss $f_{i,t}$ is $L$-Lipschitz on $\mathcal{X}_r:=\{x\in\mathcal{M}:d(x,\mathcal{X})\le r\}$, i.e.,
$|f_{i,t}(x)-f_{i,t}(y)|\le L\,d(x,y)$ for all $x,y\in\mathcal{X}_r$.
\end{assumption}

Whenever $f_{i,t}$ is h-convex, this assumption bounds every h-subgradient on $\mathcal{X}$. Fix $y\in\mathcal{X}$ and a nonzero $v\in\partial^{h}f_{i,t}(y)$, and set $x_s:=\mathrm{Exp}_y\!\left(s\,v/\|v\|\right)$ with $0<s\le r$, so that $x_s\in\mathcal{X}_r$. Since $B_{y,v}(x_s)=s\|v\|$ by the definition of $B_{y,v}$, Definition~\ref{def:hconvex} and Lipschitzness give
\[
  s\|v\|\;\le\;f_{i,t}(x_s)-f_{i,t}(y)\;\le\;Ls,
\]
hence $\|v\|\le L$. Because $\partial^{h}_{\mu}f_{i,t}(y)\subseteq\partial^{h}f_{i,t}(y)$ \cite[Proposition~2(i)]{criscitiello2025horospherically}, the same bound holds in the strongly h-convex case. The neighborhood condition is needed because the geodesic direction defining an h-subgradient at a boundary point of $\mathcal X$ need not remain in $\mathcal X$. Assumption~\ref{as:losses} therefore controls both the displacement of the local update and the terms arising in the regret analysis.  The convexity requirement is imposed separately in the main results: we consider h-convex losses in Theorem~\ref{thm:hconvex} and $\mu$-strongly h-convex losses in Theorem~\ref{thm:stronghconvex}. Together with the compactness of $\mathcal{X}$, Lipschitz continuity guarantees the existence of a best fixed decision $x^\ast$ defined above.

\subsection{Auxiliary Geometric Lemmas}

We collect here the geometric inequalities used in the regret analysis. Lemmas \ref{lem:threepoint} and \ref{lem:stewart} provide the inequalities underlying the h-convex and strongly h-convex analyses, respectively, while Lemmas \ref{lem:jensen} and \ref{lem:contraction} characterize the averaging and contraction properties of weighted Fréchet-mean consensus.

\begin{lemma}[{\cite[Lemma~2]{criscitiello2025horospherically}}]\label{lem:threepoint}
For any $x,y,p\in\M$, it holds that
\begin{equation}
    \dist^2(p,x)\le \dist^2(p,y)+\dist^2(x,y)+2\,\Bus_{y,-\Log_y(p)}(x).
\end{equation}
\end{lemma}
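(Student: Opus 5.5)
The plan is to reduce the inequality to the CAT(0) comparison inequality along a long geodesic segment, and then pass to the limit that defines the Busemann function. First dispose of the trivial case $p=y$. Then $\Log_y(p)=0$, so $\Bus_{y,0}\equiv 0$, and the claim reads $\dist^2(y,x)\le \dist^2(x,y)$, which holds with equality.

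For $p\neq y$, set $s:=\dist(y,p)>0$ and $v:=-\Log_y(p)$. By the definition of $\Bus_{y,v}$, the relevant ray $\gamma$ has $\gamma(0)=y$ and $\dot\gamma(0)=-v/\|v\|=\Log_y(p)/s$. So $\gamma$ is the unit-speed geodesic ray from $y$ through $p$, with $\gamma(s)=p$. Since $\|v\|=s$, we get $\Bus_{y,v}(x)=s\,\Bus_\gamma(x)$, and the target becomes
\[
\dist^2(p,x)\le s^2+\dist^2(x,y)+2s\,\Bus_\gamma(x).
\]

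Now fix $t>s$ and apply the CAT(0) inequality, valid on any Hadamard manifold, to the geodesic segment from $y$ to $\gamma(t)$. The point $p=\gamma(s)$ lies on this segment at fraction $\lambda:=s/t$. For any $x$ the inequality gives
\[
\dist^2(x,p)\le (1-\lambda)\dist^2(x,y)+\lambda\,\dist^2(x,\gamma(t))-\lambda(1-\lambda)t^2 .
\]
Write $\dist(x,\gamma(t))=t+b_t$, where $b_t:=\dist(x,\gamma(t))-t$. By the triangle inequality $|b_t|\le \dist(x,y)$, and $b_t\to\Bus_\gamma(x)$ as $t\to\infty$. Expanding the right-hand side:
\begin{itemize}
\item $\lambda(t+b_t)^2=st+2s\,b_t+s\,b_t^2/t$;
\item $-\lambda(1-\lambda)t^2=-st+s^2$.
\end{itemize}
The $st$ terms cancel, which leaves
\[
\dist^2(x,p)\le \Bigl(1-\tfrac{s}{t}\Bigr)\dist^2(x,y)+s^2+2s\,b_t+\tfrac{s\,b_t^2}{t}.
\]

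Finally, let $t\to\infty$. Because $b_t$ is bounded, $s\,b_t^2/t\to0$ and $(1-s/t)\dist^2(x,y)\to\dist^2(x,y)$. Together with $b_t\to\Bus_\gamma(x)$, this yields the desired inequality.

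The main step is invoking the CAT(0) inequality in the right form, namely the interpolated version along a geodesic. On Hadamard manifolds this follows from the comparison theorem, or equivalently from the $1$-strong g-convexity of $\tfrac12\dist^2(x,\cdot)$ along geodesics. The remaining care is bookkeeping: the $O(t)$ terms must cancel exactly, and the $b_t^2/t$ remainder must vanish, for which the boundedness $|b_t|\le\dist(x,y)$ is what is needed.
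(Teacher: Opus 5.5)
Your proof is correct and complete. The paper itself gives no proof of this lemma; it imports it verbatim from Criscitiello et al. So there is no in-paper argument to match. Your limiting argument along the ray from $y$ through $p$ is a standard way to establish it: the finite-$t$ inequality is weakened by $b_t\ge B_\gamma(x)$, and it becomes the claim once you let $t\to\infty$, using the boundedness of $b_t$.

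Two small remarks. First, the interpolated CAT(0) inequality you invoke is exactly Lemma~\ref{lem:stewart} of the paper. Apply it to the triangle with vertices $x$, $y$, $\gamma(t)$ and the point $p$ on the side $y\gamma(t)$, with $|py|=s$, $|p\gamma(t)|=t-s$ and $|y\gamma(t)|=t$, then divide by $t$. This gives your inequality with $\lambda=s/t$, so the argument stays within the paper's own toolbox. Second, you use $d(y,\gamma(t))=t$ without saying so: geodesic rays on a Hadamard manifold are globally minimizing, so $\gamma|_{[0,t]}$ is the geodesic segment from $y$ to $\gamma(t)$. This fact enters both the term $\lambda(1-\lambda)t^2=st-s^2$ and the bound $|b_t|\le d(x,y)$, so it is worth stating explicitly.
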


\begin{lemma}[{\cite[Lemma~2.4]{sahinoglu2025online}}]\label{lem:stewart}
Let $\triangle abc$ be a geodesic triangle and let $p$ lie on the geodesic segment $bc$. Then
\begin{equation}\label{eq:stewart}
    |ab|^2|pc|+|ac|^2|pb|\ge \big(|pa|^2+|pb|\,|pc|\big)|bc|.
\end{equation}
\end{lemma}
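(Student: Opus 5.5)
The statement to prove is Lemma~\ref{lem:stewart}, a Stewart-type inequality on a Hadamard manifold. The plan is to compare the geodesic triangle $\triangle abc$ with a Euclidean comparison triangle, which is possible because $\M$ is CAT(0). Since Stewart's theorem holds with equality in the plane, it suffices to show that nonpositive curvature pushes the inequality in the claimed direction.

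Concretely, parametrize $p$ on the segment $bc$ as $p=\gamma(s)$. Here $\gamma:[0,1]\to\M$ is the geodesic from $b$ to $c$, so that $|pb|=s|bc|$ and $|pc|=(1-s)|bc|$. If $|bc|=0$ the claim is trivial, so assume $|bc|>0$ and divide \eqref{eq:stewart} by $|bc|$. The inequality then becomes
\[
(1-s)|ab|^2+s|ac|^2\ \ge\ |pa|^2+s(1-s)|bc|^2 .
\]
This is exactly the defining CAT(0) inequality for the function $d^2(a,\cdot)$ along geodesics. On a Hadamard manifold this function is $2$-strongly geodesically convex, i.e.
\[
d^2(a,\gamma(s))\le (1-s)d^2(a,b)+s\,d^2(a,c)-s(1-s)d^2(b,c).
\]

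To justify this, take the comparison triangle $\bar a\bar b\bar c\subset\mathbb{R}^2$ with the same side lengths, and let $\bar p$ be the comparison point on $\bar b\bar c$ with $|\bar p\bar b|=|pb|$. The CAT(0) property of Hadamard manifolds (Alexandrov/Toponogov comparison, equivalently that the Hessian of $\tfrac12 d^2(a,\cdot)$ dominates the metric when curvature is nonpositive) gives $|pa|\le|\bar p\bar a|$. In the plane, Stewart's theorem holds with equality:
\[
|\bar a\bar b|^2|\bar p\bar c|+|\bar a\bar c|^2|\bar p\bar b|=\big(|\bar p\bar a|^2+|\bar p\bar b||\bar p\bar c|\big)|\bar b\bar c|,
\]
which can be checked from the law of cosines or by expanding $\|(1-s)\bar b+s\bar c-\bar a\|^2$. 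Replacing $|\bar p\bar a|$ by the smaller $|pa|$, and keeping all other lengths equal, yields \eqref{eq:stewart}.

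The only substantive step is the comparison $|pa|\le|\bar p\bar a|$. I would cite it as a standard fact, namely that Hadamard manifolds are CAT(0) spaces (e.g.\ \cite{bacak2014convex}). If a self-contained argument were wanted, I would instead prove the strong convexity of $d^2(a,\cdot)$ along $\gamma$ by the Hessian comparison $\nabla^2\tfrac12 d^2(a,\cdot)\succeq I$, using that $\Exp_a$ is expansive under nonpositive curvature. I would then integrate twice along $\gamma$ to get the displayed inequality.
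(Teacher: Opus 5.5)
Your proof is correct. After dividing by $|bc|$, the claim is exactly the CAT(0) inequality $d^2(a,\gamma(s))\le(1-s)d^2(a,b)+s\,d^2(a,c)-s(1-s)d^2(b,c)$. Your comparison-triangle argument proves it: Stewart's theorem holds with equality in the plane, and $|pa|\le|\bar p\bar a|$ can only decrease the right-hand side. The degenerate case $|bc|=0$ is handled correctly.

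The paper gives no proof of this lemma; it is imported from \cite{sahinoglu2025online}. So there is no in-paper argument to compare against, and yours serves as a valid self-contained justification.

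An equally short alternative avoids comparison triangles and works locally at $p$. On a Hadamard manifold the law-of-cosines inequality gives
$|ab|^2\ge|pa|^2+|pb|^2-2\langle\Log_p a,\Log_p b\rangle$ and $|ac|^2\ge|pa|^2+|pc|^2-2\langle\Log_p a,\Log_p c\rangle$.
Since $p$ lies on the segment $bc$, the vectors $\Log_p b$ and $\Log_p c$ point in opposite directions, so $|pc|\Log_p b+|pb|\Log_p c=0$. Multiply the first inequality by $|pc|$ and the second by $|pb|$, add them, and use $|pb|+|pc|=|bc|$. The inner products cancel and \eqref{eq:stewart} follows immediately.

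That route needs only the single-vertex angle comparison, whereas yours uses the full CAT(0) comparison. Both rest on the same nonpositive-curvature fact, and neither involves a curvature lower bound, which is what the paper needs.
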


\begin{lemma}[{\cite[Theorem~6.2]{sturm2003probability}}]
\label{lem:jensen}
Let $\bar y_w$ be the weighted Fr\'echet mean of $\{y_i\}_{i=1}^n$
with weights $\{w_i\}_{i=1}^n$. For any g-convex function
$f:\mathcal{M}\rightarrow\mathbb{R}$,
\begin{equation}
f(\bar y_w)\leq \sum_{i=1}^n w_i f(y_i).
\end{equation}
\end{lemma}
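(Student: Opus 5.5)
The statement is Sturm's Jensen inequality for weighted Fr\'echet means. I would prove it by a stochastic-approximation (law of large numbers) argument on the Hadamard space $\M$, combined with convexity of $f$ along geodesics.

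\textbf{Step 1: inductive means.} Let $\nu:=\sum_i w_i\delta_{y_i}$. Draw an i.i.d. sequence $\xi_1,\xi_2,\dots$ with law $\nu$. Define the inductive means $S_1:=\xi_1$ and
\[
S_{k+1}:=\gamma_{S_k,\xi_{k+1}}\!\left(\tfrac{1}{k+1}\right),
\]
i.e., the point at fraction $1/(k+1)$ along the geodesic from $S_k$ to $\xi_{k+1}$.

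\textbf{Step 2: Jensen for the inductive means.} I would show by induction that
\[
f(S_k)\le \frac1k\sum_{j=1}^k f(\xi_j).
\]
This uses only g-convexity along the geodesic segment:
\[
f(S_{k+1})\le \tfrac{k}{k+1}f(S_k)+\tfrac{1}{k+1}f(\xi_{k+1}).
\]
Taking expectations and applying the classical strong law of large numbers to the real variables $f(\xi_j)$, the right-hand side converges almost surely to $\sum_i w_i f(y_i)$.

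\textbf{Step 3: convergence of inductive means (main obstacle).} I need $S_k\to\bar y_w$ almost surely, or at least in probability. Sturm's argument uses the nonpositive-curvature (CAT(0)) comparison inequality
\[
d^2(z,\gamma_{x,y}(s))\le(1-s)d^2(z,x)+s\,d^2(z,y)-s(1-s)d^2(x,y),
\]
with $z=\bar y_w$. One conditions on $S_k$ and uses the variance inequality $\sum_i w_i\big(d^2(x,y_i)-d^2(\bar y_w,y_i)\big)\ge d^2(x,\bar y_w)$, which follows from the same comparison inequality and the minimizing property of $\bar y_w$. Together these give the recursion
\[
\mathbb E\,d^2(S_{k+1},\bar y_w)\le \tfrac{k^2}{(k+1)^2}\,\mathbb E\,d^2(S_k,\bar y_w)+\tfrac{1}{(k+1)^2}\,\mathrm{Var}(\nu).
\]
Hence $\mathbb E\,d^2(S_k,\bar y_w)\le \mathrm{Var}(\nu)/k\to0$. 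The delicate part is getting the cross terms to cancel exactly, so this is where the care goes.

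\textbf{Step 4: passing to the limit.} Since the $y_i$ are finitely many, all $S_k$ lie in their compact geodesic convex hull. On that hull $f$ is continuous, because a convex function on a manifold is locally Lipschitz in the interior. Along a subsequence with $S_k\to\bar y_w$ almost surely, we then get
\[
f(\bar y_w)=\lim f(S_k)\le \lim \tfrac1k\sum_{j\le k} f(\xi_j)=\sum_i w_i f(y_i).
\]
If continuity on the boundary of the hull is in doubt, I would fall back on lower semicontinuity, which is all that is needed for this inequality direction. For the real-valued convex $f$ here, continuity holds anyway.
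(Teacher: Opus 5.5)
Your proof is correct and is essentially the argument behind the result the paper cites: Sturm's proof of Theorem~6.2 via inductive means, the variance inequality, and his law of large numbers $\mathbb{E}\,d^2(S_k,\bar y_w)\le \mathrm{Var}(\nu)/k$; the paper itself gives no proof beyond the citation. Two loose ends are cosmetic only: Step~2 needs no expectation, since the strong law applied pathwise suffices, and because $f$ is real-valued and g-convex on all of $\M$ it is locally Lipschitz everywhere, so Step~4 has no boundary issue.
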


\begin{lemma}[{\cite[Lemma~3]{chen2024decentralized}}]
\label{lem:contraction}
Let $\{y_i\}_{i=1}^n\subset\mathcal{M}$, and let $x_i$ denote the weighted Fr\'echet mean of $\{y_j\}_{j=1}^n$ with weights $\{w_{ij}\}_{j=1}^n$, where $W=(w_{ij})$ is symmetric and doubly stochastic. Then
\begin{equation}
\label{eq:contraction}
V_F(\{x_i\}_{i=1}^n)
\leq
\sigma_2^2(W)\,
V_F(\{y_i\}_{i=1}^n).
\end{equation}
\end{lemma}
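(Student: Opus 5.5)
The plan is to reduce the manifold statement to the Euclidean fact that averaging with $W$ contracts zero-mean vectors by $\sigma_2(W)$. The reduction linearizes at the Fr\'echet mean $\bar y$ of $\{y_i\}_{i=1}^n$. The first step is pure minimality: since $\bar x$ minimizes $z\mapsto\frac1n\sum_i d^2(z,x_i)$,
\[
V_F(\{x_i\}_{i=1}^n)\le \frac1n\sum_{i=1}^n d^2(\bar y,x_i),
\]
so it suffices to show $\frac1n\sum_i d^2(\bar y,x_i)\le \sigma_2^2(W)\,V_F(\{y_i\}_{i=1}^n)$.

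Next, set $u_j:=\Log_{\bar y}(y_j)\in T_{\bar y}\M$. The first-order optimality condition for the Fr\'echet mean on a Hadamard manifold gives $\sum_j u_j=0$, and $\|u_j\|=d(\bar y,y_j)$, so $V_F(\{y_i\})=\frac1n\sum_j\|u_j\|^2$. Define the Euclidean averages $\tilde u_i:=\sum_j w_{ij}u_j$. The stacked vector $(u_j)_j$ is orthogonal to the consensus direction $\mathbf 1$ in each coordinate, and $W$ is symmetric doubly stochastic. Hence, coordinatewise in an orthonormal basis of $T_{\bar y}\M$, the standard spectral argument gives
\[
\frac1n\sum_i\|\tilde u_i\|^2\le \sigma_2^2(W)\,\frac1n\sum_j\|u_j\|^2=\sigma_2^2(W)\,V_F(\{y_i\}_{i=1}^n).
\]

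The remaining and main step is the comparison $d(\bar y,x_i)\le\|\tilde u_i\|$, or at least $\frac1n\sum_i d^2(\bar y,x_i)\le\frac1n\sum_i\|\tilde u_i\|^2$. In other words, the intrinsic weighted mean must lie no farther from $\bar y$ than the tangent-space average does. This is where nonpositive curvature must enter, and I expect it to be the main obstacle.

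\begin{itemize}
\item \textbf{First attempt.} Apply Lemma~\ref{lem:jensen} to the g-convex function $z\mapsto d(\bar y,z)$, which gives $d(\bar y,x_i)\le\sum_j w_{ij}\|u_j\|$. This is too weak, since it only yields the factor $1$ in place of $\sigma_2^2(W)$.
\item \textbf{Refinement.} Combine Sturm's variance inequality $\sum_j w_{ij}d^2(z,y_j)\ge d^2(z,x_i)+\sum_j w_{ij}d^2(x_i,y_j)$ with $z=\bar y$, the Euclidean expansion $\|\tilde u_i\|^2=\sum_j w_{ij}\|u_j\|^2-\sum_j w_{ij}\|u_j-\tilde u_i\|^2$, and the Hadamard comparison $d(a,b)\ge\|\Log_{\bar y}a-\Log_{\bar y}b\|$. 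Together these should give $d^2(\bar y,x_i)\le\sum_j w_{ij}\|u_j\|^2-\sum_j w_{ij}d^2(x_i,y_j)$.
\end{itemize}

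The delicate point is to lower bound $\sum_j w_{ij}d^2(x_i,y_j)$ by $\sum_j w_{ij}\|u_j-\tilde u_i\|^2$. Here the fact that $\tilde u_i$ minimizes $v\mapsto\sum_j w_{ij}\|u_j-v\|^2$ helps: applying the comparison with $a=x_i$, $b=y_j$ gives $\sum_j w_{ij}d^2(x_i,y_j)\ge\sum_j w_{ij}\|\Log_{\bar y}x_i-u_j\|^2\ge\sum_j w_{ij}\|\tilde u_i-u_j\|^2$. Substituting this into the variance-inequality bound gives exactly $d^2(\bar y,x_i)\le\|\tilde u_i\|^2$.

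Averaging over $i$ and chaining with the first two displays then yields Eq.~\eqref{eq:contraction}. If the comparison step turns out to need more care, the fallback is to carry out the same argument in the product Hadamard space $\M^n$. There the consensus map is the proximal-type map of the Dirichlet energy $\frac1{2n}\sum_{i,j}w_{ij}d^2(z_i,z_j)$, and its contraction can be derived from the convexity of $d^2$ on $\M\times\M$.
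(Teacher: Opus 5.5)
The paper does not prove this lemma itself. It imports it from \cite{chen2024decentralized}, so your proposal has to stand on its own, and it does.

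The reduction $V_F(\{x_i\}_{i=1}^n)\le\frac1n\sum_i d^2(\bar y,x_i)$, the identity $\sum_j u_j=0$, and the tangent-space spectral bound are all correct. The spectral bound uses symmetry of $W$, so that $\mathbf{1}^\perp$ is $W$-invariant with $\|Wv\|\le\sigmatwo\|v\|$ there. Your ``refinement'' also already closes the key step rather than merely suggesting it. Sturm's variance inequality at $z=\bar y$, the nonexpansiveness $\|\Log_{\bar y}a-\Log_{\bar y}b\|\le d(a,b)$, and the fact that $\tilde u_i$ minimizes $v\mapsto\sum_j w_{ij}\|u_j-v\|^2$ together give $d^2(\bar y,x_i)\le\sum_j w_{ij}\|u_j\|^2-\sum_j w_{ij}d^2(x_i,y_j)\le\|\tilde u_i\|^2$. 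So the hedging and the product-space fallback can be deleted.

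Your route differs from the usual argument only in how you obtain $d(\bar y,x_i)\le\|\tilde u_i\|$. The standard shortcut uses that $F_i(z):=\frac12\sum_j w_{ij}d^2(z,y_j)$ is $1$-strongly g-convex on a Hadamard manifold, with $\mathrm{grad}F_i(x_i)=0$. Add the strong-convexity inequality at $\bar y$ (evaluated at $x_i$) to $F_i(\bar y)-F_i(x_i)\ge\frac12 d^2(\bar y,x_i)$. Since $\mathrm{grad}F_i(\bar y)=-\tilde u_i$, this gives $d^2(\bar y,x_i)\le\langle\mathrm{grad}F_i(\bar y),-\Log_{\bar y}x_i\rangle\le\|\tilde u_i\|\,d(\bar y,x_i)$.

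That route is one line, but it leans on the smooth first-order characterization of strong g-convexity, i.e., the Hessian lower bound for $\frac12 d^2$. Yours is purely comparison-geometric: the CAT(0) variance inequality plus the law-of-cosines comparison behind nonexpansiveness of $\Log_{\bar y}$. It also yields a slightly sharper intermediate bound, whose slack is exactly $\sum_j w_{ij}\bigl(d^2(x_i,y_j)-\|\tilde u_i-u_j\|^2\bigr)\ge0$. Both arguments use nonpositive curvature at a single point and introduce no curvature constants, which is what the paper's curvature-independent analysis needs.
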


Importantly, this contraction depends only on the communication matrix and requires no sectional-curvature parameter.

\section{Algorithm and Main Results}

\subsection{Distributed Riemannian Online Gradient Descent}

The proposed Distributed Riemannian Online Gradient Descent (D-ROGD) alternates a local Riemannian h-subgradient update with an implicit weighted Fr\'echet-mean consensus step. At round $t$, agent $i$ computes 
\begin{equation*}
    \nabla_{i,t}\in\begin{cases}
        \partial^hf_{i,t}(x_{i,t}),&\text{when }f_{i,t} \text{ is h-convex},\\
        \partial^h_{\mu}f_{i,t}(x_{i,t}),&\text{when }f_{i,t} \text{ is $\mu$-strongly h-convex},
    \end{cases}
\end{equation*}
takes a projected h-subgradient step to obtain $y_{i,t+1}$, and then computes $x_{i,t+1}$ by weighted Fr\'echet-mean consensus, as summarized in Algorithm~\ref{alg:drogd}.

\begin{algorithm}[t]
\caption{Distributed Riemannian Online Gradient Descent (D-ROGD)}
\label{alg:drogd}
\begin{algorithmic}[1]
\STATE \textbf{Input:} feasible set $\X\subseteq\M$, horizon $T$, step sizes $\{\eta_t\}$, matrix $W$, common initial point $x_1\in\mathcal X$, where $x_{i,1}=x_1$ for all $i\in[n]$.
\FOR{$t=1$ to $T$}
    \FOR{each agent $i=1,\dots,n$ (in parallel)}
        \STATE Observe $\nabla_{i,t}\in\partial^h_{\mu}f_{i,t}(x_{i,t})$ ($\mu=0$ in the h-convex case).
        \STATE $y_{i,t+1}=P_{\X}\big(\Exp_{x_{i,t}}(-\eta_t\nabla_{i,t})\big)$.
        \STATE $x_{i,t+1}=\argmin_{z\in\X}\left\{\sum_{j=1}^n w_{ij}\,\dist^2(y_{j,t+1},z)\right\}$.
    \ENDFOR
\ENDFOR
\end{algorithmic}
\end{algorithm}
 
For the analysis, let $\bar x_t$ denote the Fr\'echet
mean of $\{x_{i,t}\}_{i=1}^n$. We decompose the static regret as
\begin{align}
\Reg(T)
&= \underbrace{\frac{1}{n}\sum_{i=1}^n\sum_{t=1}^T\big(f_t(x_{i,t})-f_{i,t}(x_{i,t})\big)}_{\text{(I): network error}} \nonumber\\
&\quad + \underbrace{\frac{1}{n}\sum_{i=1}^n\sum_{t=1}^T f_{i,t}(x_{i,t})-\sum_{t=1}^T f_t(x^\ast)}_{\text{(II): optimization error}}. \label{eq:decomp}
\end{align}
Term~(I) captures the network disagreement, whereas term~(II) captures the local optimization error.

\subsection{Curvature-Independent Network Error}
The following lemma bounds the disagreement among the agents under arbitrary step size schedules in terms of the network spectral gap. This curvature-independent estimate is the main tool for controlling the network-error term in~\eqref{eq:decomp}.

\begin{lemma}[Network Error]\label{lem:neterror}
Let Assumptions \ref{as:min}, \ref{as:network} and \ref{as:losses} hold. Then, for any nonnegative step size sequence $\eta_t$, the iterates of Algorithm~\ref{alg:drogd} satisfy
\begin{equation}\label{eq:neterror}
    \frac{1}{n}\sum_{t=1}^T\sum_{i=1}^n \dist(x_{i,t},\bar{x}_t)\le \frac{L\sigmatwo}{1-\sigmatwo}\sum_{k=1}^{T-1}\eta_k.
\end{equation}
\end{lemma}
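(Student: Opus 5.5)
Set $V_t:=V_F(\{x_{i,t}\}_{i=1}^n)$ and $U_{t+1}:=V_F(\{y_{i,t+1}\}_{i=1}^n)$. We will derive a linear recursion for $\sqrt{V_t}$ and then unroll it. Cauchy--Schwarz gives $\frac1n\sum_i d(x_{i,t},\bar x_t)\le\sqrt{V_t}$. It therefore suffices to show
\[
\sum_{t=1}^T\sqrt{V_t}\le \frac{L\sigmatwo}{1-\sigmatwo}\sum_{k=1}^{T-1}\eta_k .
\]
Since all agents start at $x_1$, we have $V_1=0$.

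\textbf{Step 1 (consensus contraction).} Line 6 of Algorithm~\ref{alg:drogd} computes exactly the weighted Fr\'echet mean of $\{y_{j,t+1}\}$ with weights $\{w_{ij}\}_j$. Minimizing over $\X$ or over $\M$ gives the same point, as remarked after Assumption~\ref{as:min}. Lemma~\ref{lem:contraction} then yields $\sqrt{V_{t+1}}\le\sigmatwo\sqrt{U_{t+1}}$.

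\textbf{Step 2 (local steps perturb the variance by at most $L\eta_t$).} The Fr\'echet mean minimizes $z\mapsto\frac1n\sum_i d^2(z,y_{i,t+1})$, so we may test this objective at the point $\bar x_t$. This gives $U_{t+1}\le\frac1n\sum_i d^2(\bar x_t,y_{i,t+1})$. The point $\bar x_t$ lies in $\X$, because it is a Fr\'echet mean of points in $\X$. Nonexpansiveness of $P_\X$ therefore gives
\[
d(\bar x_t,y_{i,t+1})\le d(\bar x_t,\Exp_{x_{i,t}}(-\eta_t\nabla_{i,t}))\le d(\bar x_t,x_{i,t})+\eta_t\|\nabla_{i,t}\|\le d(\bar x_t,x_{i,t})+L\eta_t .
\]
Here $\|\nabla_{i,t}\|\le L$ is the consequence of Assumption~\ref{as:losses} derived after that assumption, and it holds in both the h-convex and the strongly h-convex cases. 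Applying Minkowski's inequality in $\ell^2$ over the agents gives $\sqrt{U_{t+1}}\le\sqrt{V_t}+L\eta_t$.

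\textbf{Step 3 (unrolling).} Combining Steps 1 and 2 gives
\[
\sqrt{V_{t+1}}\le\sigmatwo\bigl(\sqrt{V_t}+L\eta_t\bigr).
\]
With $V_1=0$, this unrolls to
\[
\sqrt{V_t}\le L\sum_{k=1}^{t-1}\sigma_2(W)^{t-k}\eta_k .
\]
We then sum over $t=1,\dots,T$ and exchange the order of summation. Each $\eta_k$ (for $k\le T-1$) is multiplied by at most $\sum_{m\ge1}\sigma_2(W)^m=\sigmatwo/(1-\sigmatwo)$, which yields \eqref{eq:neterror}. No monotonicity or other structure of $\eta_t$ is used, so the bound holds for arbitrary nonnegative schedules.

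The main obstacle is Step 2. The variance is measured around different centers at times $t$ and $t+1$, and the geodesic triangle geometry could in principle introduce curvature constants. Testing the variance functional at the old mean $\bar x_t$, together with the triangle inequality, avoids this: it uses only metric properties, namely the minimality of the Fr\'echet mean and the nonexpansiveness of the projection, and no comparison inequality. The remaining point to check is that $\bar x_t\in\X$, which is needed for nonexpansiveness of $P_\X$ against the fixed point $P_\X(\bar x_t)=\bar x_t$. This follows from the g-convexity of $\X$ and the Fr\'echet-mean property noted after Assumption~\ref{as:min}.
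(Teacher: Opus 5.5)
Your proposal is correct and follows essentially the same argument as the paper: you apply Lemma~\ref{lem:contraction}, test the Fr\'echet variance of the $y_{i,t+1}$ at the previous mean $\bar x_t$, use the triangle and Minkowski inequalities to get $\sqrt{V_{t+1}}\le\sigma_2(W)(\sqrt{V_t}+L\eta_t)$, and then unroll and exchange sums. The only cosmetic difference is the order of two steps. You apply nonexpansiveness of $\mathcal{P}_{\X}$ against $\bar x_t\in\X$ and then use the triangle inequality. The paper instead first splits $d(\bar x_t,y_{i,t+1})\le d(\bar x_t,x_{i,t})+d(x_{i,t},y_{i,t+1})$ and then uses nonexpansiveness against $x_{i,t}\in\X$. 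Both routes are valid.
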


\begin{proof}
Let $\sigma:=\sigma_2(W)$ and define $a_t:=V_F(\{x_{i,t}\}_{i=1}^n)^{1/2}$. By Cauchy--Schwarz, $ \frac{1}{n}\sum_{i=1}^n d(x_{i,t},\bar{x}_t)\leq a_t.$
Applying Lemma~\ref{lem:contraction} and using the definition of the Fr\'echet variance,
\begin{equation}
a_t
\leq
\sigma V_F(\{y_{i,t}\}_{i=1}^n)^{1/2}
\leq
\sigma
\left(
\frac{1}{n}\sum_{i=1}^n d^2(\bar{x}_{t-1},y_{i,t})
\right)^{1/2}.    
\end{equation}

By triangle inequality on $\mathcal{M}$ and Minkowski's inequality,
$$
\left(
\frac{1}{n}\sum_{i=1}^n
d^2(\bar{x}_{t-1},y_{i,t})
\right)^{1/2}
\leq
a_{t-1}
+
\left(
\frac{1}{n}\sum_{i=1}^n
d^2(x_{i,t-1},y_{i,t})
\right)^{1/2}.
$$
Moreover, by the nonexpansiveness of $\mathcal{P}_{\mathcal{X}}$ and
the bound $\|\nabla_{i,t-1}\|\leq L$,
we have $d(x_{i,t-1},y_{i,t})\leq \eta_{t-1}\|\nabla_{i,t-1}\| \leq L\eta_{t-1}.$
Therefore, $a_t\leq \sigma a_{t-1}+\sigma L\eta_{t-1}$.

Since the agents are initialized at a common point, $a_1=0$. Unrolling the recursion gives $a_t \leq L\sum_{k=1}^{t-1}\sigma^{t-k}\eta_k.$ Therefore,
\begin{equation}
\frac{1}{n}\sum_{t=1}^{T}\sum_{i=1}^{n}
d(x_{i,t},\bar{x}_t)
\leq
\sum_{t=1}^{T}a_t
\leq
L\sum_{k=1}^{T-1}\eta_k
\sum_{t=k+1}^{T}\sigma^{t-k}.    
\end{equation}

Using $\sum_{t=k+1}^{T}\sigma^{t-k}\leq \sigma/(1-\sigma)$ for $\sigma\in[0,1)$ yields
\begin{equation}
\frac{1}{n}\sum_{t=1}^{T}\sum_{i=1}^{n}
d(x_{i,t},\bar{x}_t)
\leq
\frac{L\sigma}{1-\sigma}
\sum_{k=1}^{T-1}\eta_k,    
\end{equation} which proves the claim.
\end{proof}

To bound the network-error term, add and subtract
$f_t(\bar{x}_t)$ and $f_{i,t}(\bar{x}_t)$ and obtain $f_t(x_{i,t})-f_{i,t} (x_{i,t}) = f_t(x_{i,t})-f_t(\bar{x}_t) + f_t(\bar{x}_t)-f_{i,t}(\bar{x}_t) + f_{i,t}(\bar{x}_t)-f_{i,t}(x_{i,t}).$
Averaging over $i$, the term $\sum_i\left(f_t(\bar{x}_t)-f_{i,t}(\bar{x}_t)\right)$ vanishes since
$f_t=\frac{1}{n}\sum_{i=1}^n f_{i,t}$. Moreover, both $f_t$ and
$f_{i,t}$ are $L$-Lipschitz, so
$
\text{(I)}
\leq
\frac{2L}{n}
\sum_{t=1}^{T}\sum_{i=1}^{n}
d(x_{i,t},\bar{x}_t).
$
Applying Lemma~\ref{lem:neterror} yields
\begin{equation}
\label{eq:term1}
\text{(I)}
\leq
\frac{2L^2\sigma_2(W)}{1-\sigma_2(W)}
\sum_{t=1}^{T-1}\eta_t.
\end{equation}

\subsection{Main Results}
We now combine the network-error bound in~\eqref{eq:term1} with bounds on the optimization-error term~(II) to establish the regret guarantees for h-convex and strongly h-convex objectives.

\begin{theorem}[Regret for h-convex objectives]\label{thm:hconvex}
Let Assumptions~\ref{as:min}--\ref{as:losses} hold and suppose $f_{i,t}$ is h-convex for every $i\in[n]$, $t\in[T]$. Then Algorithm~\ref{alg:drogd} with constant step size $\eta_t= \eta =1/\sqrt{T}$ satisfies
\begin{equation}\label{eq:hconvexreg}
\Reg(T)\le \Big(\frac{2L^2\,\sigmatwo}{1-\sigmatwo}+\frac{L^2+D^2}{2}\Big)\,\sqrt{T}.
\end{equation}
\end{theorem}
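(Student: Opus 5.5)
We split the regret as in \eqref{eq:decomp}. Term~(I) is already controlled by \eqref{eq:term1}: with $\eta_t=1/\sqrt{T}$ we have $\sum_{t=1}^{T-1}\eta_t\le\sqrt{T}$, so (I) is at most $\frac{2L^2\sigmatwo}{1-\sigmatwo}\sqrt{T}$. The remaining work is to show that term~(II) is at most $\frac{L^2+D^2}{2}\sqrt{T}$.

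\textbf{Per-agent inequality.} Write $z_{i,t+1}:=\Exp_{x_{i,t}}(-\eta\nabla_{i,t})$, so that $y_{i,t+1}=\mathcal P_\X(z_{i,t+1})$. Since $x^\ast\in\X$, nonexpansiveness of the projection gives $d(y_{i,t+1},x^\ast)\le d(z_{i,t+1},x^\ast)$. We apply Lemma~\ref{lem:threepoint} with $y=x_{i,t}$, $x=x^\ast$ and $p=z_{i,t+1}$. Then $-\Log_{x_{i,t}}(z_{i,t+1})=\eta\nabla_{i,t}$, and Busemann functions are positively homogeneous in $v$, so $B_{x_{i,t},\eta\nabla_{i,t}}=\eta B_{x_{i,t},\nabla_{i,t}}$. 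The lemma yields
\[
d^2(z_{i,t+1},x^\ast)\ \ge\ d^2(x_{i,t},x^\ast)-\eta^2\|\nabla_{i,t}\|^2-2\eta B_{x_{i,t},\nabla_{i,t}}(x^\ast).
\]
This is the wrong direction for a descent bound, so the roles of $x$ and $p$ must be swapped. Take $y=x_{i,t}$, $p=x^\ast$, $x=z_{i,t+1}$ instead, and use the h-subgradient inequality at $x^\ast$: $f_{i,t}(x^\ast)-f_{i,t}(x_{i,t})\ge B_{x_{i,t},\nabla_{i,t}}(x^\ast)$.

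The cleanest route is the single-agent h-convex OGD inequality of \cite{sahinoglu2025online}:
\[
f_{i,t}(x_{i,t})-f_{i,t}(x^\ast)\le \frac{d^2(x_{i,t},x^\ast)-d^2(y_{i,t+1},x^\ast)}{2\eta}+\frac{\eta}{2}L^2 .
\]
It comes from combining Lemma~\ref{lem:threepoint}, applied in the form giving $d^2(z_{i,t+1},x^\ast)\le d^2(x_{i,t},x^\ast)+\eta^2\|\nabla_{i,t}\|^2+2\eta B_{x_{i,t},\nabla_{i,t}}(x^\ast)$ (via the Busemann identity along the geodesic ray in direction $-\nabla_{i,t}$), with the h-subgradient inequality and $\|\nabla_{i,t}\|\le L$. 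Checking the sign and orientation conventions of $B_{y,v}$ here is the main obstacle, and I would verify it carefully against Definition~\ref{def:hconvex}.

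\textbf{Consensus does not increase distance to $x^\ast$.} The function $d^2(\cdot,x^\ast)$ is g-convex on a Hadamard manifold, so Lemma~\ref{lem:jensen} gives $d^2(x_{i,t+1},x^\ast)\le\sum_j w_{ij}d^2(y_{j,t+1},x^\ast)$. Summing over $i$ and using the double stochasticity of $W$,
\[
\sum_i d^2(x_{i,t+1},x^\ast)\le\sum_i d^2(y_{i,t+1},x^\ast).
\]

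\textbf{Telescoping.} Average the per-agent inequality over $i$ and replace $\sum_i d^2(y_{i,t+1},x^\ast)$ by the smaller-or-equal $\sum_i d^2(x_{i,t+1},x^\ast)$ on the negative side. Summing over $t$ then telescopes:
\[
\text{(II)}\le \frac{1}{n}\sum_i\frac{d^2(x_{i,1},x^\ast)}{2\eta}+\frac{\eta T L^2}{2}\le\frac{D^2}{2\eta}+\frac{\eta TL^2}{2}.
\]
With $\eta=1/\sqrt{T}$ this equals $\frac{D^2+L^2}{2}\sqrt{T}$. Adding the bound on (I) gives \eqref{eq:hconvexreg}.
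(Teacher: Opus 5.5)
Your proposal is correct and follows the paper's proof step for step: you bound (I) via \eqref{eq:term1}, and you bound (II) via Lemma~\ref{lem:threepoint}, the h-subgradient inequality, nonexpansiveness of $\mathcal{P}_{\X}$, the consensus bound \eqref{eq:consgap}, and telescoping. One point to clean up is your detour: your first display misstates Lemma~\ref{lem:threepoint}, because the same assignment $y=x_{i,t}$, $p=z_{i,t+1}$, $x=x^\ast$, together with $-\Log_{x_{i,t}}(z_{i,t+1})=\eta\nabla_{i,t}$ and positive homogeneity, directly gives the upper bound $d^2(z_{i,t+1},x^\ast)\le d^2(x_{i,t},x^\ast)+\eta^2\|\nabla_{i,t}\|^2+2\eta B_{x_{i,t},\nabla_{i,t}}(x^\ast)$ that you eventually use (exactly as the paper does), so no role swap is needed (the swapped version would produce $B_{x_{i,t},-\Log_{x_{i,t}}(x^\ast)}(z_{i,t+1})$, which the h-subgradient inequality cannot control) and no sign-convention issue remains open.
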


\begin{proof}
Let $z_{i,t}:=\operatorname{Exp}_{x_{i,t}}(-\eta\nabla_{i,t})$. By Definition \ref{def:hconvex} with $v=\nabla_{i,t}$, $f_{i,t}(x_{i,t})-f_{i,t}(x^\ast) \leq -B_{x_{i,t},\nabla_{i,t}}(x^\ast)$. Since $-\operatorname{Log}_{x_{i,t}}(z_{i,t})=\eta\nabla_{i,t}$, the positive homogeneity of the scaled Busemann function gives $-B_{x_{i,t},\nabla_{i,t}}(x^\ast) = -\frac{1}{\eta} B_{x_{i,t}, -\operatorname{Log}_{x_{i,t}}(z_{i,t})}(x^\ast)$. Applying Lemma~\ref{lem:threepoint} with $y=x_{i,t}$, $p=z_{i,t}$, and $x=x^\ast$ yields
\begin{equation}
\label{eq:fdiff}
f_{i,t}(x_{i,t})-f_{i,t}(x^\ast) \leq \frac{1}{2\eta} \left[ d^2(z_{i,t},x_{i,t}) +d^2(x^\ast,x_{i,t}) -d^2(z_{i,t},x^\ast) \right].
\end{equation}

Since $d(z_{i,t},x_{i,t})=\eta\|\nabla_{i,t}\|\leq \eta L$, and $x^\ast\in\mathcal{X}$, the nonexpansiveness of $\mathcal{P}_{\mathcal{X}}$ gives $d(y_{i,t+1},x^\ast) = d\!\left( \mathcal{P}_{\mathcal{X}}(z_{i,t}), \mathcal{P}_{\mathcal{X}}(x^\ast) \right) \leq d(z_{i,t},x^\ast)$.
Hence,
\begin{align}\label{eq:19}
f_{i,t}(x_{i,t})-f_{i,t}(x^\ast)
\leq{}&
\frac{\eta L^2}{2}
+
\frac{1}{2\eta}
\left[
d^2(x^\ast,x_{i,t})
-d^2(x^\ast,x_{i,t+1})
\right]
\nonumber\\
&+
\frac{1}{2\eta}
\left[
d^2(x^\ast,x_{i,t+1})
-d^2(x^\ast,y_{i,t+1})
\right].
\end{align}

Since $d^2(\cdot,x^\ast)$ is g-convex on a Hadamard manifold, Lemma~\ref{lem:jensen} applied to the weighted Fr\'echet-mean consensus step gives $d^2(x^\ast,x_{i,t+1}) \leq \sum_{j=1}^n w_{ij}d^2(x^\ast,y_{j,t+1})$. Summing over $i$ and using the double stochasticity of $W$ yields
\begin{equation}
\label{eq:consgap}
\sum_{i=1}^n d^2(x^\ast,x_{i,t+1})
\leq
\sum_{i=1}^n d^2(x^\ast,y_{i,t+1}).
\end{equation}
Therefore, by averaging \eqref{eq:19} over $i$ and using~\eqref{eq:consgap}, the last term in \eqref{eq:19} vanishes. Also, another term telescopes over $t$ and we get
\begin{align}
\text{(II)} &\leq \frac{\eta L^2}{2}T + \frac{1}{2\eta n} \sum_{i=1}^n \left[ d^2(x^\ast,x_{i,1}) - d^2(x^\ast,x_{i,T+1}) \right] \\
&\leq \frac{\eta L^2}{2}T+\frac{D^2}{2\eta}.    
\end{align}
Combining this bound with~\eqref{eq:term1} and setting $\eta=1/\sqrt{T}$ yields the result.
\end{proof}

\begin{theorem}[Regret for strongly h-convex objectives]\label{thm:stronghconvex}
Let Assumptions~\ref{as:min}--\ref{as:losses} hold and suppose $f_{i,t}$ is $\mu$-strongly h-convex for every $i\in[n]$, $t\in[T]$. Then Algorithm~\ref{alg:drogd} with the diminishing step-size schedule $\eta_t=1/(\mu t)$ satisfies
\begin{equation}
\operatorname{Reg}(T)
\leq
\frac{1}{\mu}
\left(
\frac{2L^2\sigma_2(W)}{1-\sigma_2(W)}
+
\frac{L^2}{2}
\right)
(1+\log T).
\end{equation}
\end{theorem}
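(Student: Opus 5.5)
We follow the decomposition \eqref{eq:decomp}. Term~(I) is handled directly by \eqref{eq:term1} with $\eta_t=1/(\mu t)$. Since $\sum_{t=1}^{T-1}1/t\le 1+\log T$, this gives
\[
\text{(I)}\le \frac{2L^2\sigma_2(W)}{\mu(1-\sigma_2(W))}(1+\log T).
\]
It remains to show that term~(II) is at most $\frac{L^2}{2\mu}(1+\log T)$.

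For (II), we work at a fixed agent and round. Let $z_{i,t}:=\Exp_{x_{i,t}}(-\eta_t\nabla_{i,t})$ and let $p_{i,t}:=\Exp_{x_{i,t}}(-\frac{1}{\mu}\nabla_{i,t})$ be the center of the quadratic minorant $Q^\mu$. Definition~\ref{def:stronghconvex} gives
\[
f_{i,t}(x_{i,t})-f_{i,t}(x^\ast)\le \frac{1}{2\mu}\|\nabla_{i,t}\|^2-\frac{\mu}{2}d^2(p_{i,t},x^\ast).
\]
The key geometric step uses Lemma~\ref{lem:stewart}. Take $a=x^\ast$, $b=x_{i,t}$, $c=p_{i,t}$, and $p=z_{i,t}$, which lies on the geodesic segment $bc$ because $\eta_t=1/(\mu t)\le 1/\mu$. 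Set $\lambda:=\mu\eta_t=1/t\in(0,1]$, so that $|pb|=\lambda|bc|$ and $|pc|=(1-\lambda)|bc|$. Dividing \eqref{eq:stewart} by $|bc|$ (the case $\nabla_{i,t}=0$ is trivial) gives
\[
d^2(z_{i,t},x^\ast)\le (1-\lambda)\,d^2(x_{i,t},x^\ast)+\lambda\, d^2(p_{i,t},x^\ast)-\lambda(1-\lambda)|bc|^2,
\]
where $|bc|=\|\nabla_{i,t}\|/\mu$.

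Solving for $\lambda\,d^2(p_{i,t},x^\ast)$ and multiplying by $\mu/(2\lambda)=1/(2\eta_t)$ gives
\[
-\frac{\mu}{2}d^2(p_{i,t},x^\ast)\le \frac{1-\mu\eta_t}{2\eta_t}d^2(x_{i,t},x^\ast)-\frac{1}{2\eta_t}d^2(z_{i,t},x^\ast)-\frac{(1-\lambda)\|\nabla_{i,t}\|^2}{2\mu}.
\]
Adding the $\frac{1}{2\mu}\|\nabla_{i,t}\|^2$ term, the gradient contributions combine to $\frac{\lambda}{2\mu}\|\nabla_{i,t}\|^2=\frac{\eta_t}{2}\|\nabla_{i,t}\|^2\le \frac{\eta_t L^2}{2}$. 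This yields the Euclidean-style per-step inequality
\[
f_{i,t}(x_{i,t})-f_{i,t}(x^\ast)\le \frac{\eta_tL^2}{2}+\Big(\frac{1}{2\eta_t}-\frac{\mu}{2}\Big)d^2(x_{i,t},x^\ast)-\frac{1}{2\eta_t}d^2(z_{i,t},x^\ast).
\]

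Next we pass from $z_{i,t}$ to the next iterate, exactly as in the proof of Theorem~\ref{thm:hconvex}. Nonexpansiveness of $\mathcal P_{\mathcal X}$ gives $d(y_{i,t+1},x^\ast)\le d(z_{i,t},x^\ast)$. Averaging over $i$ and applying \eqref{eq:consgap}, which follows from Lemma~\ref{lem:jensen} and double stochasticity, replaces $-\frac1n\sum_i d^2(z_{i,t},x^\ast)$ by the smaller quantity $-\frac1n\sum_i d^2(x_{i,t+1},x^\ast)$.

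Writing $\Phi_t:=\frac1n\sum_i d^2(x_{i,t},x^\ast)$ and summing over $t$, we obtain
\[
\text{(II)}\le \sum_t\frac{\eta_tL^2}{2}+\sum_{t=1}^T\Big[\Big(\frac{1}{2\eta_t}-\frac{\mu}{2}\Big)\Phi_t-\frac{1}{2\eta_t}\Phi_{t+1}\Big].
\]
With $1/\eta_t=\mu t$, the coefficient of $\Phi_t$ for $t\ge2$ is $\frac{\mu t}{2}-\frac{\mu}{2}-\frac{\mu(t-1)}{2}=0$. The $t=1$ coefficient is $\frac{\mu}{2}-\frac{\mu}{2}=0$, and the final term $-\frac{\mu T}{2}\Phi_{T+1}$ is nonpositive. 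Hence
\[
\text{(II)}\le \frac{L^2}{2\mu}\sum_{t=1}^T\frac1t\le \frac{L^2}{2\mu}(1+\log T),
\]
and no $D$ term appears. Adding this to the bound on (I) gives the claim.

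The main obstacle is the Stewart-type step. We must check that $z_{i,t}$ lies on the segment from $x_{i,t}$ to $p_{i,t}$, which requires $\mu\eta_t\le1$ and holds for all $t\ge1$. We must also track the $\lambda(1-\lambda)$ term carefully so that the residual gradient coefficient is exactly $\eta_t/2$. If the term were bounded more crudely, the $\|\nabla_{i,t}\|^2/(2\mu)$ contribution would not cancel and the $\log T$ rate would be lost.
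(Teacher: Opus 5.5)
Your proposal is correct and follows the paper's proof essentially step for step. It uses the same decomposition \eqref{eq:decomp} with \eqref{eq:term1} for (I). For (II) it combines the quadratic minorant from strong h-convexity with Lemma~\ref{lem:stewart} on the triangle with vertices $x^\ast$, $x_{i,t}$, $\Exp_{x_{i,t}}(-\mu^{-1}\nabla_{i,t})$ and $p=z_{i,t}$. It then applies nonexpansiveness of $\mathcal{P}_{\mathcal{X}}$ and \eqref{eq:consgap}, and telescopes under $\eta_t=1/(\mu t)$.
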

 
\begin{proof}
Let $u_{i,t}:=\Exp_{x_{i,t}}(-\mu^{-1}\nabla_{i,t})$ and
$z_{i,t}:=\Exp_{x_{i,t}}(-\eta_t\nabla_{i,t})$. By $\mu$-strong
h-convexity and Definition~\ref{def:stronghconvex},
\[
  f_{i,t}(x_{i,t})-f_{i,t}(x^*)\;\le\;\tfrac{1}{2\mu}\|\nabla_{i,t}\|^2
  -\tfrac{\mu}{2}\,d^2(u_{i,t},x^*).
\]
Since $\eta_t=1/(\mu t)\leq1/\mu$, the point $z_{i,t}$ lies on the geodesic segment
from $x_{i,t}$ to $u_{i,t}$, with
$|u_{i,t}x_{i,t}|=\mu^{-1}\|\nabla_{i,t}\|$,
$|z_{i,t}x_{i,t}|=\eta_t\|\nabla_{i,t}\|$ and
$|z_{i,t}u_{i,t}|=(\mu^{-1}-\eta_t)\|\nabla_{i,t}\|$. Applying
Lemma~\ref{lem:stewart} to $\triangle x^*x_{i,t}u_{i,t}$ with
$p=z_{i,t}$ and simplifying gives
\begin{equation*}
    \begin{aligned}
        d^2(u_{i,t},x^*)\ge&\tfrac{1}{\mu\eta_t}d^2(z_{i,t},x^*)
        +\tfrac{1}{\mu}\Big(\tfrac{1}{\mu}-\eta_t\Big)\|\nabla_{i,t}\|^2\\
        &-\tfrac{1}{\eta_t}\Big(\tfrac{1}{\mu}-\eta_t\Big)d^2(x^*,x_{i,t}).
    \end{aligned}
\end{equation*}
Substituting this bound and using $d(z_{i,t},x^*)\ge d(y_{i,t+1},x^*)$
by non-expansiveness, we obtain
\begin{align*}
\textstyle f_{i,t}(x_{i,t})-f_{i,t}(x^\ast)
\textstyle\le& \textstyle\frac{\eta_t}{2}\cdot \|\nabla_{i,t}\|^2 - \textstyle\frac{1}{2\eta_t}\cdot \bigl|\,y_{i,t+1}\,x^* \bigr|^2\\
&+\textstyle\left(\frac{1}{2\eta_t}-\frac{\mu}{2}\right)\cdot \bigl|\,x^*\,x_{i,t}\bigr|^2\\
\textstyle=& \textstyle\frac{\eta_t}{2}\cdot \|\nabla_{i,t}\|^2 + \frac{1}{2\eta_t}\cdot \textstyle\bigl(\bigl|\,x_{i,t+1}\,x^* \bigr|^2 - \bigl|\,y_{i,t+1}\,x^* \bigr|^2\bigr)\\
    &\textstyle+ \left(\frac{1}{2\eta_t} - \frac{\mu}{2}\right)\cdot \bigl|\,x^*\,x_{i,t}\bigr|^2 - \frac{1}{2\eta_t}\cdot \bigl|\,x_{i,t+1}\,x^* \bigr|^2
\end{align*}
Averaging over $i$, applying \eqref{eq:consgap} again, and using $\eta_t=1/(\mu t)$, for which $\tfrac{1}{2\eta_t}-\tfrac{1}{2\eta_{t-1}}-\tfrac\mu2=0$, $T\ge t\ge2$, the distance terms telescope and yield
\begin{equation}\label{eq:term2strong}
\text{(II)}\le \frac{L^2}{2}\sum_{t=1}^{T}\eta_t
\le \frac{L^2}{2\mu}\big(\log T+1\big).
\end{equation}
Combining \eqref{eq:term1} and \eqref{eq:term2strong} with $\eta_t=1/(\mu t)$ and $\sum_{t=1}^{T}1/(\mu t)\le \tfrac{1}{\mu}(1+\log T)$ gives the stated bound.
\end{proof}

\begin{remark}\label{rem:curv}
The regret bounds are curvature-independent and match the Euclidean dependence on $T$, while decentralization enters only through the spectral-gap factor $\sigma_2(W)/(1-\sigma_2(W))$. In particular, when $\sigma_2(W)=0$, the network-error term vanishes, recovering the corresponding single-agent regret order. This improves upon prior distributed g-convex analyses, where curvature-dependent quantities enter the regret bounds or the consensus step size \cite{chen2024decentralized, sahinoglu2025decentralizedonlineriemannianoptimization}, though this improvement is only for the narrower class of h-convex functions.
\end{remark}

\section{Numerical Experiments}
\label{sec:numerical}

We evaluate D-ROGD on real hierarchical data embedded in hyperbolic space, varying manifold curvature and network connectivity. Fixing the intrinsic data diameter allows us to assess curvature dependence at a common geometric scale~\footnote{The code is available at: \href{https://github.com/ZhanyCai-opt/decentralized-online-riemannian-curvature-independent}{https://github.com/ZhanyCai-opt/decentralized-online-riemannian-curvature-independent}.}.

\textbf{Data and manifold.}
From the WordNet noun taxonomy \cite{miller1995wordnet} we take the sub-hierarchy rooted at \texttt{mammal.n.01}, retain $N=300$ synsets, and embed the tree's all-pairs shortest-path distances into the $10$-dimensional Poincar\'e ball \cite{nickel2017poincare} by stress minimization. For each $\kappa\in\{0.25,1,4,9,16\}$ the embedding is rescaled radially in $\mathbb{H}^{10}_{\kappa}$, the Poincar\'e ball of curvature $-\kappa$, so that the intrinsic geodesic diameter remains fixed at $D=4$ while the curvature varies. The resulting curvature--diameter parameter is $\sqrt{\kappa}D\in \{2,4,8,12,16\}$, which appears in the comparison constants of g-convex analyses \cite{zhang2016first}.

\textbf{Network and protocol.}
We use $n=8$ agents on a ring with Metropolis weights, each assigned a distinct semantic cluster of concepts by $k$-means on the embedding, so that local objectives are heterogeneous. At round $t$ agent $i$ draws a concept uniformly from its cluster and observes a perturbed copy $S_{i,t}$. We run $T=2000$ rounds over three seeds. The feasible set $\mathcal{X}$ is a geodesic ball containing the data;  the consensus step, i.e., line 6 of Algorithm~\ref{alg:drogd}, is solved by Karcher iteration to tolerance $10^{-9}$ \cite{bacak2014computing}, and $x^{\ast}$ by projected Riemannian gradient descent over all $nT$ observations in $\mathcal{X}$.

\textbf{Objectives.}
The h-convex case is the online distributed geometric median, $f_{i,t}(x)=d(x,S_{i,t})$ with $\eta_t=1/\sqrt{t}$ which mimics the $1/\sqrt{T}$ scaling of Theorem~\ref{thm:hconvex} without requiring prior knowledge of the horizon; the strongly h-convex case is the online distributed Fr\'echet mean, $f_{i,t}(x)=\tfrac12 d^2(x,S_{i,t})$ with $\eta_t=1/(\mu t)$, $\mu=1$. For the geometric median, we use the canonical h-subgradient $\nabla_{i,t}=-\mathrm{Log}_{x_{i,t}}(S_{i,t})/d(x_{i,t},S_{i,t})$ when $x_{i,t}\neq S_{i,t}$ and zero otherwise.

\textbf{Results.} Fig.~\ref{fig:regret} shows that the cumulative regret grows consistently with the predicted $O(\sqrt{T})$ and $O(\log T)$ rates for the h-convex and strongly h-convex objectives, respectively, across all tested curvatures. Fig.~\ref{fig:curv-net}(a) plots the normalized regret against $\sqrt{\kappa}D$. While the g-convex comparison factor $\zeta(\kappa,D)=\sqrt{\kappa}D\coth(\sqrt{\kappa}D)$ grows substantially with curvature, the normalized regret exhibits only mild variation in the h-convex case and decreases in the strongly h-convex case, showing no systematic curvature-dependent deterioration comparable to that induced by $\zeta(\kappa,D)$. Fig. ~\ref{fig:curv-net}(b) varies the network topology and shows increasing regret as $\sigma_2(W)$ approaches one, consistent with the spectral-gap dependence $\sigma_2(W)/(1-\sigma_2(W))$ in the theoretical bounds. Taken together, these results are consistent with the theoretical separation between curvature-independent optimization guarantees and spectral-gap-dependent network effects.


\begin{figure}
  \centering
  \includegraphics[width=\linewidth]{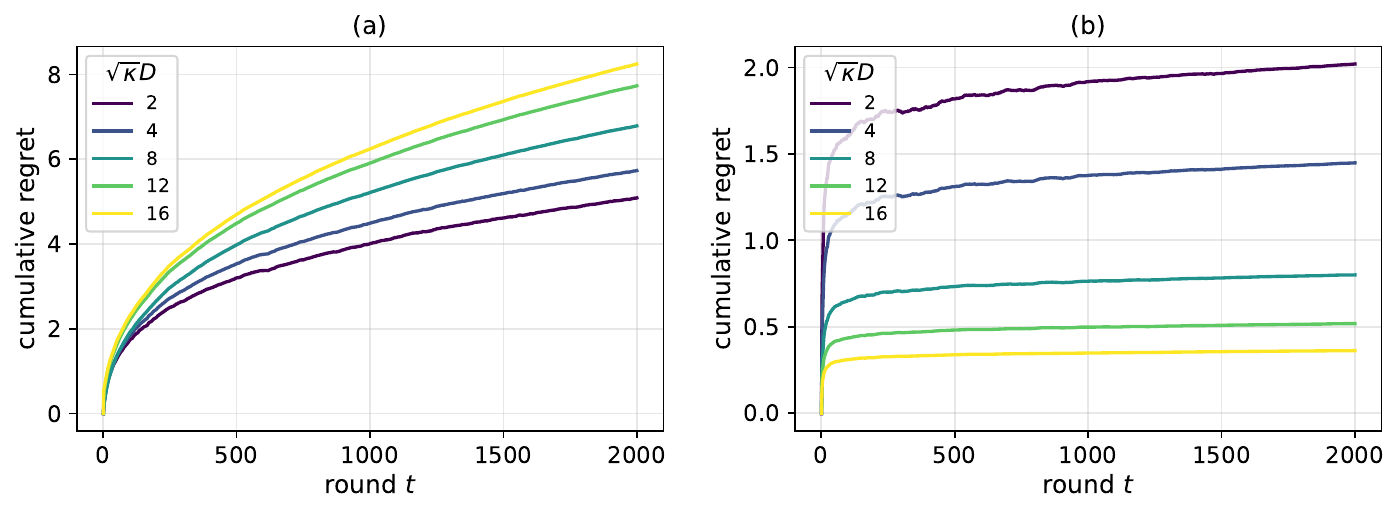}
  \caption{Cumulative network regret of D-ROGD on the embedded WordNet hierarchy data, with $D=4$ and $n=8$. Left: h-convex objective (geometric median). Right: strongly h-convex objective (Fr\'echet mean).}
  \label{fig:regret}
\end{figure}

\begin{figure}
  \centering
  \includegraphics[width=\linewidth]{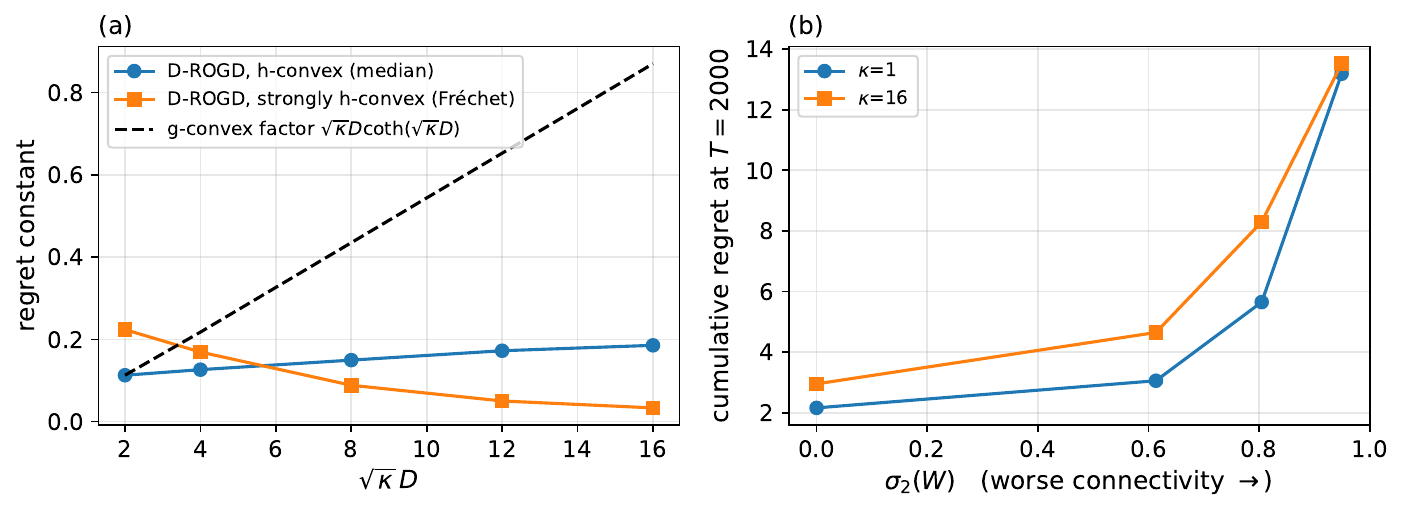}
  \caption{(a) Regret constants at $T=2000$ against $\sqrt{\kappa}D$: $\mathrm{Reg}(T)/\sqrt{T}$ (h-convex) and $\mathrm{Reg}(T)/\log T$ (strongly h-convex). The dashed curve is the g-convex curvature factor $\zeta(\kappa,D)=\sqrt{\kappa}D\coth(\sqrt{\kappa}D)$, scaled to start at the leftmost h-convex point so that slopes, not heights, are comparable.
  (b) Regret of the h-convex objective on complete, Erd\H{o}s--R\'enyi
  ($p=0.5$), ring and path graphs at two curvatures.}
  \label{fig:curv-net}
\end{figure}
 
\section{Conclusions}\label{sec:conclusion}
We studied distributed online Riemannian optimization on Hadamard manifolds beyond the standard g-convexity framework, analyzing h-convex and strongly h-convex objectives. We proposed D-ROGD, which couples a local Riemannian h-subgradient step with a Fr\'echet-mean consensus step, and established curvature-independent regret bounds of $O(\sqrt{T})$ and $O(\log T)$ for h-convex and strongly h-convex objectives, respectively. The bounds match the Euclidean rates in $T$ and depend on the network only through the spectral-gap factor $\sigmatwo/(1-\sigmatwo)$, thereby overcoming the curvature-dependent constants that arise in decentralized online g-convex analyses. Future directions include dynamic regret and time-varying networks, bandit feedback, and applications in distributed control and large-scale learning on curved spaces.

\addtolength{\textheight}{-12cm}   






\bibliographystyle{IEEEtran}
\bibliography{refs}

\end{document}